\documentclass[11pt]{article}
\usepackage[a4paper]{geometry}
\usepackage{graphicx}
\usepackage{arydshln}
\usepackage{epsfig}
\usepackage{enumerate}
\usepackage{caption}
\usepackage{subcaption}
\usepackage{wrapfig}
\usepackage{float}
\usepackage[english]{babel}
\usepackage{mathrsfs}
\usepackage{amssymb,amsmath,amsthm}
\usepackage[nottoc]{tocbibind}
\usepackage{xcolor}
\usepackage{sectsty}
\usepackage{titling}
\usepackage{tfrupee}
\usepackage[T1]{fontenc}
\usepackage{titlesec}
\usepackage{blindtext}
\usepackage{parskip}
\usepackage{makeidx}
\usepackage{multirow}
\newtheorem{theorem}{Theorem}
\usepackage{etoolbox}
\usepackage[ruled,vlined]{algorithm2e}
\makeatletter
\providecommand{\subtitle}[1]{
  \apptocmd{\@title}{\par {\large #1 \par}}{}{}
}
\makeatother
\providecommand{\keywords}[1]{\textbf{\textit{Keywords---}} #1}
\title{Improving the Robustness of Federated Learning for Severely Imbalanced Datasets}
\author{Debasrita Chakraborty, Ashish Ghosh}

\date{}

\begin{document}

\maketitle
\abstract{With the ever increasing data deluge and the success of deep neural networks, the research of distributed deep learning has become pronounced. Two common approaches to achieve this distributed learning is synchronous and asynchronous weight update. In this manuscript, we have explored very simplistic synchronous weight update mechanisms. Our motivation is that when multiple models in the nodes are trained on different parts of the data, and the aggregated weight update is performed in the main parameter server, it behaves like an ensemble system or a social learning system. Such a combination of multiple classifiers can handle data complexity very well and supersede a single sophisticated classifier. However, it has been observed that in a parallel setting, this fact is falsified if one uses a synchronous weight update mechanism with simple averaging at the parameter server. It has been seen that with an increasing number of worker nodes, the performance degrades drastically. This effect has been studied in the context of extreme imbalanced classification (e.g. outlier detection). For most distributed DL methods, it is assumed that the worker servers (or local servers) hold independent and identically distributed (i.i.d.) data. However, in practical cases, the conditions of i.i.d. may not be fulfilled. There may also arise global class imbalance situations like that of outlier detection where the local servers receive severely imbalanced data and may not get any samples from the minority class. In that case, the DNNs in the local servers will get completely biased towards the majority class that they receive. This would highly impact the learning at the parameter server (which practically does not see any data). It has been observed that in a parallel setting if one uses the existing federated weight update mechanisms at the parameter server, the performance degrades drastically with the increasing number of worker nodes. This is mainly because, with the increasing number of nodes, there is a high chance that one worker node gets a very small portion of the data, either not enough to train the model without overfitting or having a highly imbalanced class distribution. The chapter, hence, proposes a workaround to this problem by introducing the concept of adaptive cost-sensitive momentum averaging. It is seen that for the proposed system, there was no to minimal degradation in performance while most of the other methods hit their bottom performance before that.}

\keywords{Distributed Deep Learning, Fully Connected Autoencoders, Social Learning Systems, Synchronous Weight Update}

\section{Introduction}
It is witnessed that AEs have been useful in extracting meaningful feature representations for datasets with imbalanced class distributions \cite{chakraborty2019integration}. Thus, it is intriguing to study how the problem extends to decentralised data settings. AEs have an innate ability to capture the outlier features. Now, the question arises whether the AE pre-initialisation \cite{hinton2006fast} given to a DNN adds any further improvement to the capability of learning from imbalanced datasets. DNNs are inherently biased by the majority class and often fail to give good performance if the classes are severely imbalanced \cite{erfani2016high}. This problem is a big limitation when it comes to achieving high classification performances as compared to balanced classification tasks where DL thrives. DL in distributed setting \cite{gupta2018distributed} is a boon for the current times when we are facing the data deluge. Reports show that human beings have generated more data in just the last decade than have been produced over the last hundred years. This astonishing fact is seen as the foundation for several experiments of concurrent and distributed computation and distributed methodologies in computer science education. Distributed computation is a necessity of this age and it is the pinnacle of this area of science to combine it with sophisticated learning approaches such as DL (neural nets). However, distributed DL in an imbalanced scenario is a big gap that needs to be addressed. If the distributed model uses non-independent and non-identical (or simply non-i.i.d) chunks of data in each local computation server, the imbalance becomes critical and often too overwhelming for a DNN to handle. This is, however, a paradox as multiple experts are being used to model the classification boundary.

This may be viewed as an extension of the concepts of using multiple classifiers to obtain a better classification performance. Such a setting is often regarded as Social Learning System (SLS) \cite{re2012ensemble} or an Ensemble \cite{zhang2012ensemble} or a Multi-Classifier System (MCS) \cite{tamen2017efficient}. This is due to the complementary nature of each of the classifier in the SLS \cite{re2012ensemble}. This diversity is usually achieved either through diversity in data or diversity in models. Deep neural networks under such ensemble setting have been proven to give better performance than their single model counterparts \cite{zhang2012ensemble}. Perceiving that every independent classifier may make complementary errors, we can merge together the decisions from all classifiers to build a composite framework that outperforms any single classifier. Such a combination of multiple classifiers can handle data complexity very well and supersede a single sophisticated classifier. Owing to the training on different inputs, the pool of predictors is ensured to be highly diverse and mutually complementary.

Borrowing the idea of these SLSs, it is expected that a distributed DL model should give better performance than their single model analogy. We can view this particular setting as an ensemble or a social learning system where multiple models in the nodes are trained on different parts of the data, and the aggregated weight update is performed in the main parameter server, it behaves like an SLS. This is also known as federated learning (FL). In the parameter server version of the distributed setting the data samples are spread through a variety of processing nodes and each node has one deep network. There is a parameter server that receives all the parameters (weights and biases) after each communication round, aggregates it and sends the updated aggregated weights to all the worker nodes for the next communication round to begin [Figure \ref{DNU}]. This method of weight update often results in decreased performance as the number of predictors increase. This is mostly due to the fact that the data distribution across the worker nodes are do not have identical distributions and might also have local class imbalance issues. DNNs being just and extension of the traditional neural networks suffer from masking and swamping effects. However, DNNs are state-of-the-art models for most classification tasks. This problem is aggravated if the actual data has an inherent global imbalance (as in case of outlier detection where the class imbalance is severe) and is so big that it needs a distributed processing. As DNNs are inherently unsuitable for the problem of such outlier detection, distributed models fail. However, as identified by many researchers \cite{wang2016auto, protopapadakis2017stacked}, AEs are better suited as outlier detection models. Thus, it would be interesting to see how the AE pre-initialised DNNs perform for outlier detection tasks.

It is even more intriguing to study how an AE pre-initialised distributed architecture of DNNs using a global weight update mechanism at the parameter server would handle the severely imbalanced datasets as with the outlier detection problem. The global data is itself severely imbalanced (outlier class constitute <10\% of the total samples) and thus the local datasets might be critically imbalanced (some may have negligible percentage or completely losing samples from the minority class). Here arises the Hace's theorem of Big Data \cite{wu2013data} when the data is not globally available to all the worker nodes, each of the nodes train independently over any arbitrary portion of the data and make predictions on that.

\begin{figure}
  \centering
      \includegraphics[width=0.7\textwidth]{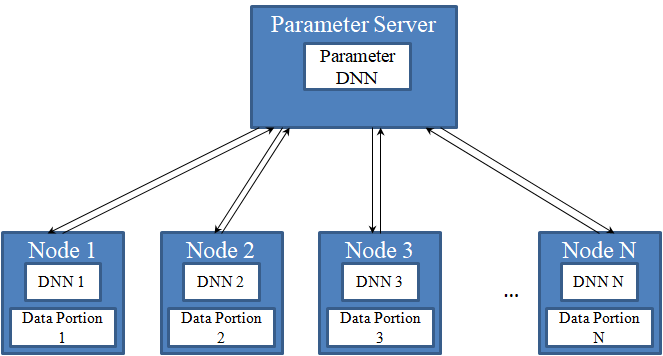}
  \caption{Schematic representation of a federated deep learning setting (DNN represents a deep neural network)}
  \label{DNU}
\end{figure}

In a federated learning \cite{sarkar2020fed, wang2021addressing, zhou2021communication, duan2020self} scenario, the local workers do not communicate among themselves and can only communicate indirectly via the parameter server where they only send the model parameters for aggregation. The parameter server does not contain any data for training. Training is only done in the worker nodes. Each worker node has one deep neural network (DNN) model and a portion of the entire dataset and all the neural networks of each worker node have identical architectures. The idea of FL is that each of the worker nodes do not reveal their local datasets to each other. This property is quite useful in scenarios where data privacy preservation is important among collaborating nodes. The parameter server has as an identical deep neural network only for storing the final updated parameters after each communication round. The basic assumption for such FL implementations are that the data division across the worker nodes are independent and identically distributed (i.i.d) \cite{jacobs1992independent}.

The traditional method, known as federated averaging (FedAvg) \cite{mcmahan2017communication} works simply by averaging all the respective parameters from the worker nodes. This method works well in cases of i.i.d data sample division across worker nodes and is highly scalable. However, for non-i.i.d setting, this method fails as the number of worker nodes increase \cite{wu2021fast}. Moreover, scenarios of class imbalance introduce saturation problem for neural networks \cite{liu2020high}. The non-i.i.d nature of data sample distribution and class imbalance across worker nodes are practical issues and may arise if the local datasets at each worker node come from mostly (or completely) one particular class.

In that case, that particular local model never sees the samples from the minority class. The proposed method handles this intriguing situations when the data is distributed across the local workers such that, the total data is severely imbalanced (in case of under-represented class <10\%) and some of the local workers may not get any samples of the under-represented class. The present manuscript proposes an alternative by introducing a cost-sensitive momentum averaging scheme. The proposed method has been shown to handle these two problems better than the state-of-the-art methods.

The total size of the actual global dataset is constant. In order to get a severely imbalanced global data, we used some datasets with outliers \cite{hawkins2013identification} (one class had <10\% of the total samples). The number of samples each worker node receives is chosen carefully by clustering (details in section \ref{PropDDL}) to avoid human bias. Under such conditions, it has been seen that with increasing number of worker nodes, the performance of the state-of-the-art methods degrades drastically. This degradation is more pronounced when the data is already globally imbalanced. It has been observed that the proposed method is robust in handling the scenario. The manuscript, hence, proposes a workaround by introducing the concept of local cost-sensitive momentum averaging. It is seen that for the proposed system, there was minimal degradation in performance and was robust than the other methods.
\section{Contributions}
In this article, it is shown that AE pre-initialisation strategy works best for outlier detection tasks using a neural network when retraining is done by the focal loss function. Further, this article proposes an adaptive focal loss for learning from severely imbalanced datasets in distributed deep learning setting. In this context, critical class imbalance is explored. We argue that standard methods for federated learning fail to learn from severely imbalanced data and hence face a criticality issue. This is mostly due to either lesser number of samples at the local worker (which makes the local model overfit) or severely imbalanced partitions at the local worker such that most of the workers do not get a single sample from the minority class. We have shown that as the number of workers are increased, the homogeneity \cite{rosenberg2007v} of the local datasets also increase and tend to 1. If all the local datasets contain only a single class' samples, then the homogeneity becomes 1. If the local datasets have perfectly identical representations of the classes as the global dataset then the homogeneity becomes zero. For any severely imbalanced dataset, the homogeneity is seen to increase with the number of workers. So, the assumption, that increasing number of workers result in situations where one or more local datasets have no samples from the minority class (or classes), holds true. So, the major contribution of this work is as follows:
\begin{enumerate}[(i)]
\item study on how AE pre-initialisation helps for imbalanced learning (as in outlier detection tasks),
\item handling outlier detection in federated deep learning, and
\item a novel adaptive focal loss function.
\end{enumerate}

The problem of critical class imbalance in federated deep learning has not been explored or handled in the literature to the best knowledge of the authors. This problem is inherently unique due to its complex nature and resembles a very practical situation (described in section \ref{ddldp}). The basic assumption of i.i.d data distribution across workers is violated in such cases and along with that, the severe class imbalance also affects the performance of the local model at each worker. Related studies have either assumed the data across workers to be i.i.d sampled or having imbalanced class. However, no method has handled the two issues simultaneously.

There may be a criticism of the assumption that the number of samples in the global dataset is constant and that doesn't make the comparison fair on the grounds of unequal number of samples for different number of nodes. Under that light, it needs to be explained that the aim of the experiment is to study a practical situation occurs where small chunks of data are generated by a number of remote devices and data privacy is a concern. The aim was to use the entire training data each time without making any inherent assumption of a local data chunk being more important than the other. This study is different than all the existing studies in this aspect too. The following section describes the related state-of-the art literature around FL.

\section{Related Works}
The manuscript deals with the concepts of FL for classification using pre-trained fully connected deep neural networks. FL in DL is still in its pristine state. The collaborative learning mechanism is perfect for reaping the benefits of DL in distributed data settings. The era of Big Data and the expertise of DL algorithms have forced the researchers to devise novel ways to develop deep distributed learning \cite{gupta2018distributed}. The solutions, however, are problem specific. In cases where the data is too big to be contained in a single machine, but the DL model can be trained in a single machine, the data is  distributed and the model is replicated across different machines. Each of these machines perform local computations on the models and interact with each other directly or through a server to understand a global view of the data without transferring the data as a whole. This is called data-distribution architecture \cite{shallue2019measuring}. In cases where the data is not so big, but the DL model has so many parameters that it cannot be trained in a single machine, the model is  distributed and the data is replicated across different machines. This is called model-distribution architecture \cite{moreno2021heterogeneous}. In cases where both the data as well as the DL model is too big for a single machine, both the data as well as the model are distributed. This is known as hybrid-distribution architecture \cite{oyama2020case}. This manuscript is concerned with data-distribution architecture with the workers and parameter server model shown in Figure \ref{DNU}.

In this context, many researchers have come up with methods that are capable of training deep neural networks where data is distributed across worker nodes, also known as Federated Deep Learning \cite{xu2020privacy}. Each of the worker nodes train a local model on the local dataset that they contain and send the updates (change in weights of the local models) to the parameter server. The parameter server then aggregates the updates or parameters (in this case weights and biases) and sends the aggregated parameters back to the worker nodes for re-initialising the local models after each communication round.

The standard method of aggregation is simple averaging \cite{mcmahan2017communication}, also known as FedAvg. In that case the weights of the parameter server are simple average of the weights of the nodes.
\begin{equation}
w_{param}^t=\dfrac{\sum\limits_{i=1}^N w_i^t}{N},
\end{equation}
where, $w_{param}^t$ is weights of the parameter server model at $t^{th}$ round and $w_i^t$ are the weights of the $i^{th}$ worker's local model at $t^{th}$ round. $N$ represents the number of total models participating in the training. This equation is also often written in terms of the weight change vector ($\Delta w$) too.
\begin{equation}
\Delta w_{param}^t=\dfrac{\sum\limits_{i=1}^N \Delta w_i^t}{N},
\end{equation}
where, $\Delta w_{param}^t$ is the change in weights of the parameter server model at $t^{th}$ round and $\Delta w_i^t$ are the change in weights of the $i^{th}$ worker's local model at $t^{th}$ round.

There are several variations on the simple averaging where the sample size each local model gets is also taken into account \cite{zhou2021communication, mcmahan2017communication}. In sample size weighted averaging (SSFedAvg), the weights of the local networks are multiplied by the proportion of the total dataset they contain. So, if an $i^{th}$ worker contains $d_i$ samples, then the weight aggregation rule is given as,
\begin{equation}
w_{param}^t=\dfrac{\sum\limits_{i=1}^N d_i w_i^t}{N \sum\limits_{i'=1}^N d_{i'}}
\end{equation}

Some other researchers studied the effects of non-i.i.d data settings across the worker nodes and found that the standard variations of FedAvg fail to converge faster. Moreover, the weights of the local models drift apart and the aggregation then requires some adaptive form which prevents the drift. In this context an adaptive aggregation strategy (FedAdp) was proposed \cite{wu2021fast} which takes into account the direction of individual local gradient with respect to the global server gradient. The weight update at each $i^{th}$ worker node at communication round $t$ is given by,
\begin{equation}
w_{i}^{t+1}=w_{param}^{t}-\eta \Delta F_i(w_{param}^{t}),
\end{equation}
where, $F_i()$ is the error function at $i^{th}$ worker node and since the local weights are updated by the global aggregated weight at each communication round, the local weight is updated on top of the parameter server weights calculated at the previous round $w_{param}^{t-1}$. $\Delta F_i(w_{param}^{t})$ denotes the local gradient of $i^{th}$ worker node's model at communication round $t$.
So, the global gradient is given by,
\begin{equation}
\Delta F(w_{param}^{t})=\dfrac{\sum\limits_{i=1}^N d_i \Delta F_i(w_{param}^{t})}{N \sum\limits_{i'=1}^N d_{i'}} .
\end{equation}
The angle of deviation between the global gradient and the local gradient at $i^{th}$ node is given by,
\begin{equation}
\theta_i^t= arc~cos\dfrac{\langle \Delta F(w_{param}^{t}).\Delta F_i(w_{param}^{t})\rangle}{||\Delta F(w_{param}^{t})||||\Delta F_i(w_{param}^{t})||}.
\end{equation}
So, if $\theta_i^t$ is small, it means that the local model is updated in the same direction as the global model and would contribute more to the aggregation. The angles are smoothed and a non-linear decreasing function is imposed on it.
Smoothed angle in radian is given by,
\begin{eqnarray}
\tilde{\theta}_i^t &=& \theta_i^t, ~~~if~t=1\\
 &=& \dfrac{t-1}{t}\tilde{\theta}_i^{t-1}+\dfrac{1}{t}\theta_i^t, ~~~if~t>1,
\end{eqnarray}
and the non-linear decreasing function is given by,
\begin{equation}
f(\tilde{\theta}_i^t)=\alpha(1-e^{-e^{-\alpha(\tilde{\theta}_i^t-1)}}),
\end{equation}
where $\alpha$ is a constant.
The adaptive weight for each $i^{th}$ model in aggregation becomes,
\begin{equation}
\phi_i^t=\dfrac{\sum\limits_{i=1}^N d_i e^{f(\tilde{\theta}_i^t)}}{N \sum\limits_{i'=1}^N d_{i'} f(\tilde{\theta}_{i'}^t)}.
\end{equation}
So, now the weight aggregation rule becomes,
\begin{equation}
w_{param}^t=\sum\limits_{i=1}^N \phi_i w_i^t
\end{equation}
However, even though the method of FedAdp was capable of handling non-i.i.d data setting, it did not take into account cases of extreme imbalance. Moreover, for non-i.i.d data, the number of rounds needed to achieve a desired performance for 10 participating worker nodes were in the order of hundreds.

Another method of aggregation was introduced \cite{zhang2015deep} which could improve the performance of the worker models by allowing them to explore and fluctuate from the parameter's center variables. They argued that since it is a collaborative learning strategy, there might be numerous local optima. The method is called Elastic Averaging (EASGD). In this method, the aggregated weight is a moving average of the local weights.
\begin{equation}\label{eq:EASGD}
w_{param}^t=\gamma w_{param}^{t-1}+ (1-\gamma) \sum\limits_{i=1}^N \dfrac{w_i^t}{N}
\end{equation}
However, in a situation of local imbalance across the workers for non-i.i.d data setting, it is observed that EASGD fails as the number of worker nodes increase.

A detailed study \cite{hyunsoo2021study} on some of the methods for selecting some well-learnt workers to update the global model has been done in the context of imbalanced class distribution. However, such methods aim at discarding a number of workers which are supposedly not good-enough to contribute to the global model. In doing so, they are rejecting the data that the rejected workers hold and the global model does not get to learn from that particular portion of the data.

Researchers have focussed on the problem with data imbalance across worker nodes and have proposed methods to overcome the challenges \cite{duan2020self, duan2019astraea}. A self-balancing methodology known as Astraea is adapted such that the local datasets in each node is rebalanced by augmentation. However, such methods often work on augmenting the local datasets on workers with the minority class. Thus, have an inherent assumption that the local models have seen atleast some actual samples of the minority class. In cases when one or more local models are completely deprived of one or more classes, the problem cannot be handled by minority class data augmentation. Resampling is a popular method to alleviate the imbalance problem by undersampling or over-sampling. However, over-sampling usually is susceptible to errors owing to excess or additional noise, while undersampling lowers the volume of training data from which the model can learn. This makes the above two approaches useless.

Focal loss \cite{lin2017focal} is a specific loss function that has been presented for a binary classification job that imbalances and reshapes a conventional model cross-entropy loss to weigh down well-classified instances and may concentrate on the learning of the hard imbalanced outliers. The focal loss for binary classification is given as,
\begin{equation}\label{eq:OriginalFocalLoss}
E=-\rho \{p_1 (1-p_2)^\xi log (p_2)+p_2 (1-p_1)^\xi log(p_1)\},
\end{equation}
where, $p_1$ and $p_2$ represent the predicted probabilities of the two classes. Usually for a binary classification problem $p_2=1-p_1$. The constant $\rho$ (0 $< \rho\leq$ 1) is a scaling factor which treats the errors made by the two classes differently and $\xi$ determines how much the easily classified samples (majority class in case of imbalanced data) will contribute to the error. For larger values of $\xi$, lesser number of easily classified samples will contribute to error. In a multi-class setting, the focal loss may be written as,
\begin{equation}
E=-\sum\limits_{i=1}^{C}\rho \{p_i (1-\hat{p}_i)^\xi log (\hat{p}_i)\},
\end{equation}
where, $C$ represents the number of classes and $p_i$ and $\hat{p}_i$ represent the predicted probabilities of the class $i$ and not-class $i$. Focal loss has been directly applied to the federated learning paradigm \cite{sarkar2020fed}. However, the method again focused more on selecting well learnt worker nodes rather than handling the imbalance in a direct fashion. A similar loss function called GHMC loss \cite{li2019gradient} (Gradient Harmonizing Mechanism Classification loss) is also proposed to handle outliers in a non-distributed setting. These methods can be straight-forwardly applied to federated learning setting as they consider imbalance of the current worker model. MFE loss \cite{wang2016training} is another cost-sensitive method that requires knowledge about the minority classes. It generates a modified form of loss using the false positives and false negatives. However, such a prior knowledge is impractical in FL. A new loss function called the Ratio Loss \cite{wang2021addressing} has been proposed recently. The method incorporates a monitoring scheme at each communication round and keeps track of the local imbalance in the training data. However, this loss function again needs to identify the minority class in the global dataset as an added burden. Moreover, it is stated by the authors that as the global imbalance increases, the effective performance of the ratio loss degrades. If the global dataset itself poses an outlier detection problem (where the imbalance is severe i.e. the minority class is only 10\% of the dataset), it will fail. Thus, a new loss function is proposed which would modify the local weights in such a manner that even critical imbalance is handled.

Therefore, there is a dire need of an efficient method to handle extreme class imbalance in local nodes. Moreover, the exiting methods on handling class imbalance mostly use pre-trained models. Hence, it has never been studied how the pre-initialisation strategy would affect the learning process in such imbalanced data setting. Because the training data is consolidated in a centralised setting in typical training situations, there is no differentiation between local and global imbalance values. Thus, minimising the negative impact of imbalance is considerably easier than in the cases of decentralised data. It should be noted that in federated learning, each local model training might be considered as a regular centralised training. Intuitively, we might use the existing approaches to handle the local imbalance problem at each round. However, local models exist only momentarily on the local worker nodes and are replaced by the most recent global model after each training cycle. As a result, fixing local imbalances may have little influence. Furthermore, because of the misalignment between local and global imbalances, just implementing known procedures at local devices is often ineffective and may even have a detrimental influence on the global model. Thus, there needs to be a mechanism which can handle each of these local imbalances and the training would work on resolving the global imbalance. The proposed method aims to handle the afore-mentioned issues and also provides a boost in performance. The following section describes the proposed method in detail.

\section{Proposed Method}\label{PropDDL}
The manuscript studies situations where imbalanced data portions may degrade the individual classification ability of the worker nodes. A deep neural network is inherently unsuitable for imbalanced classification due to masking and swamping effects. Thus, researchers have proposed many methods to train a neural network using cost-sensitive backpropagation \cite{lin2017focal, li2019gradient, wang2016training}. As the data is distributed across worker nodes, it becomes relatively difficult to handle the class imbalance. This work does not use any freely available pre-trained models. The local models are first pre-trained using an autoencoder and are then re-trained with class labels. So, each of the local models train over the data in two phases. Autoencoder pre-training is useful for class imbalance situation because the network gets biased towards learning the majority class' features efficiently. So, any data that doesn't fit into the description of the majority class may be picked up as an outlier easily \cite{wang2016auto, hawkins2013identification}. The re-training is just done to learn the classification between the majority and minority classes. The following section describes the proposed method to deal with severe class imbalance in distributed setting.

\subsection{Division of Data Across Worker Nodes}\label{ddldp}

The data is arbitrarily distributed and resembles a practical scenario where a user has no control over the number of samples each node gets. In both the cases, the datasets were partitioned and given to each worker node. Partitioning the data into non-overlapping portions makes the data across local workers `non-independent'. To remove the `identically-distributed' assumption of i.i.d, the global data clustered and each cluster represented a partition of the dataset. For this, we used $k$-means clustering where $k$ is equal to the number of worker nodes. Thus, it was ensured that the local datasets that the worker nodes get are non-i.i.d. As the number of clusters grow (which is same as increasing the number of local workers), the probability of each cluster having samples from only a single class increases \cite{rosenberg2007v}. Thus, as the number of local workers grow, there is an increased chance that the local datasets are mostly of a single class only. This method of data division is merely mimicking a practical data distribution scenario across the workers which assumes that the data generated at each local worker is mostly similar.

To understand the practical implication of this assumption, let us consider an example where there are two branches of a company located across two different locations in the world. It may happen that for one of the locations, the workers mostly relatively younger and for the other location the workers are mostly older. Thus, the first company can produce goods at a much faster rate with less errors while the second company produces goods at a slower pace and more errors than the first one. Thus, the data obtained from the first location will have more samples and majority of the samples will pass the quality check. On the other hand, the data from other location will have lesser samples and majority of the samples will not pass the quality check. So, if one attempts to learn the model of quality check from both the companies, he or she needs to learn in an imbalanced and distributed setting simultaneously.
\subsection{Cost-sensitive Neural Network Training}
The focal loss may be extended to a cost-sensitive federated learning scenario by incorporating an adaptive scaling factor in each individual local model. Since each model gets different number of samples and the class distribution in each model is different, we need to adapt the value to $\rho$ accordingly. So, an adaptive focal loss function is proposed in this context. Suppose there are  For each local model $j$, the proposed focal loss is,
\begin{equation}\label{eq:ProposedFocalLoss}
E_j=-\sum\limits_{i=1}^{C}\rho_j \{p_i (1-\hat{p}_i)^\xi log (\hat{p}_i)\},
\end{equation}
where $\rho_j$ is the adaptive scaling factor of the $j^{th}$, local model. Since, the main motive for imbalanced data is to avoid the masking phenomena (labelling all the samples with the majority class label), it is intuitive to express $\rho_j$ in terms of an crude class imbalance ratio $m_j$ for each local dataset, such that,
\begin{equation}
\rho_j=\dfrac{a}{(1+e^{-b*(m_j-1)})}
\end{equation}
where,
\begin{equation}
m_j=1-\dfrac{d_j^{majority}}{d_j^{total}}.
\end{equation}
Here, $d_j^{majority}$ and $d_j^{total}$ represent the number of samples in the majority class of the local dataset in $j^{th}$ worker and the total number of samples in the local dataset in $j^{th}$ worker respectively. A crude class imbalance is much simpler than having class wise imbalance ratio for extremely imbalanced datasets. This is because it is only the majority class which overpowers the minority classes. The other minority classes do not affect each other much as they are themselves smaller in number. For the present manuscript the values of $a$ and $b$ were chosen to be 2 and 3 respectively. So, if a local dataset has all the samples from the majority class only ($m_j=0$), the local model would be completely masked and must not be able to make decent contributions to the error. So, if $m_j=0$ then instead of pushing $\rho_j$ to zero, we are just mapping it to a value close to zero (but not exactly zero) using the scaled sigmoid function. Pushing $\rho_j$ to zero would mean that the model is absolutely correct and will not add anything to the server weights. This would imply a complete loss of information from one worker which is undesirable. Thus, the value of $\rho_j$ being close to zero would mean that the local models would atleast have some small contribution to the server model.

Another intuitive explanation for this adaptive function is that each model is independently moving towards convergence. This rate of movement depends on the individual loss they incur. Now, if a local model is overwhelmed with severe imbalance, then the constant $\rho_j$ would prevent that model to overfit the majority class too much. This would imply that more severe the imbalance a local model would face, the lesser it would deviate from the last global update, but the information would not be completely lost. This would help the global weights to not drift randomly. It can be proved that there would always atleast be one local data portion that will not have the severity in imbalance, and the model trained on that particular data would always govern the global weights. However, that model alone is not enough to learn the entire data space. So, the other models which are inflicted with severe imbalance would add the fine tuning to the global weights accordingly.
\begin{theorem}
If the global dataset $P$ has an outlier ratio $r$, and it is partitioned across $N$ divisions namely, $P_1$,  $P_2$...  $P_N$, then there would be atleast one division $P_i$ which would have outlier ratio $\geq r$.
\end{theorem}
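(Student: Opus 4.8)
The plan is to recognize that the global outlier ratio is nothing but a weighted average of the local outlier ratios, and then to invoke the elementary fact that the maximum of a finite collection of numbers is never smaller than any convex combination of them. First I would fix notation: write $n = |P|$ for the total number of samples and $n_o$ for the number of outliers, so that $r = n_o / n$. For each division $P_i$ let $n_i = |P_i|$ denote its size and $o_i$ its number of outliers, and set $r_i = o_i / n_i$ for its local outlier ratio. Since the $P_i$ form a partition of $P$ (non-overlapping and exhaustive, as guaranteed by the $k$-means splitting of Section~\ref{ddldp}), the counts add up: $\sum_{i=1}^N n_i = n$ and $\sum_{i=1}^N o_i = n_o$.

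The key step is then the algebraic identity
\begin{equation}
r = \frac{n_o}{n} = \frac{\sum_{i=1}^N o_i}{\sum_{i=1}^N n_i} = \sum_{i=1}^N \frac{n_i}{n}\cdot\frac{o_i}{n_i} = \sum_{i=1}^N w_i\, r_i,
\end{equation}
where $w_i = n_i / n$ satisfies $w_i \geq 0$ and $\sum_{i=1}^N w_i = 1$. Thus $r$ is a convex combination of the local ratios $r_1,\ldots,r_N$, from which $\max_i r_i \geq r$ follows immediately. Equivalently, I would argue by contradiction: if every $r_i$ were strictly less than $r$, then $o_i < r\, n_i$ for each $i$, and summing these inequalities would give $n_o = \sum_{i=1}^N o_i < r \sum_{i=1}^N n_i = r n = n_o$, which is absurd. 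Hence at least one division $P_i$ must satisfy $r_i \geq r$.

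The only genuine subtlety, and the place where I would take care, is the degenerate case of an empty division ($n_i = 0$), for which $r_i = o_i/n_i$ is undefined. I would dispose of this by restricting the weighted average to the non-empty divisions: any empty $P_i$ contributes $w_i = 0$ and $o_i = 0$, so it can be discarded without altering either sum, and the averaging argument then goes through verbatim. Beyond this bookkeeping there is no real obstacle; the statement is a pigeonhole-type fact whose entire content rests on the disjoint-and-exhaustive property of the partition.
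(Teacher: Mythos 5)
Your proof is correct, and it is in fact cleaner than the paper's own argument, which follows the same broad strategy (represent $r$ as a weighted average of the local ratios $r_i$, then refute the hypothesis that $r_i < r$ for all $i$ by summation) but executes the algebra incorrectly. The paper rewrites its equation \ref{eq:01} as $n_{i,2} = (1-r_i)\,n_{i,1}$, dropping a factor of $1/r_i$ (the correct relation is $n_{i,2} = n_{i,1}(1-r_i)/r_i$), and makes the same omission in equation \ref{eq:02}; propagating these errors produces its key identity (equation \ref{eq:contra1}) $r = \sum_i r_i n_{i,1} \big/ \sum_i n_{i,1}$, a weighted average with \emph{outlier-count} weights, which is false in general: with two divisions, one containing a single outlier and the other a single inlier, the right-hand side equals $1$ while $r = 1/2$. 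Your identity $r = \sum_i w_i r_i$ with \emph{partition-size} weights $w_i = n_i/n$ is the correct one, and your summation variant ($o_i < r\,n_i$ for all $i$ implies $n_o < r\,n = n_o$) involves no division at all, so it is immune to the degeneracies that would plague a corrected version of the paper's route: fixing the paper's algebra leads instead to the harmonic-mean identity $n_1/r = \sum_i n_{i,1}/r_i$, which presupposes $r_i > 0$ for every division --- precisely what fails in the critical-imbalance scenario (divisions containing no outliers) that the theorem is meant to address. You also dispose of the empty-division case explicitly, which the paper ignores. In short, your proof is not merely a restyling of the paper's argument; it is the repaired version of it.
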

\begin{proof}
Suppose the global data having $n_1$ samples from the minority or outlier class (denoted as class 1) and $n_2$ samples from the majority or inlier class (denoted as class 2), is partitioned into $N$ local datasets. The global outlier ratio $r$ is given by,
\begin{equation}\label{eq:00}
r=\dfrac{n_1}{n_1+n_2}
\end{equation}
For the $i^{th}$ partition (local dataset) $P_i$, let there are $n_{i,1}$ samples from the minority class and $n_{i,2}$ samples from the majority class. Then the outlier ratio of that partition is given by,
\begin{equation}\label{eq:01}
r_i=\dfrac{n_{i,1}}{n_{i,1}+n_{i,2}}
\end{equation}
 Now, as the global dataset is partitioned into $N$ local data partitions,
 \begin{equation}\label{eq:1}
 \sum\limits_{i=1}^N n_{i,1}= n_1,~and
 \end{equation}
 \begin{equation}\label{eq:2}
 \sum\limits_{i=1}^N n_{i,2}= n_2.
 \end{equation}
 Equation \ref{eq:01} can be written as,
 \begin{equation}
 n_{i,2}=(1-r_i)n_{i,1},
 \end{equation}
 and, from equation \ref{eq:00}, we get
 \begin{equation}\label{eq:02}
n_2=(1-r)n_1.
\end{equation}
 Replacing equations\ref{eq:1} and \ref{eq:2} in equation \ref{eq:02}, we get,
 \begin{equation}\label{eq:contra1}
 r=\dfrac{\sum\limits_{i=1}^N r_i n_{i,1}}{\sum\limits_{i=1}^N n_{i,1}}
 \end{equation}
For argument, if we assume that for all partitions, the outlier ratio $r_i$ is strictly less than the the global outlier ratio $r$, then,
 \begin{equation}\label{eq:ineq}
 r_i<r,~for~i =~1,~2,~3,~...,~N.
 \end{equation}
Multiplying both sides of equation \ref{eq:ineq} by $m_{i,1}$, summing over all the partitions on both sides and rearranging we get,
 \begin{equation}\label{eq:contra2}
 r>\dfrac{\sum\limits_{i=1}^N r_i n_{i,1}}{\sum\limits_{i=1}^N n_{i,1}}
 \end{equation}
It can be seen that the equations \ref{eq:contra1} and \ref{eq:contra2} contradict each other. Thus, there exist atleast one partition of local data which has an outlier ratio $\geq r$.
\end{proof}
\subsection{Overall Method}
EASGD has been proven to be stable \cite{zhang2015deep} in cases of distributed deep learning. It induces an elastic force that does not allow the local model's weight matrix to deviate too much from the global weight matrix whereas also allowing some independence. The proposed method incorporates the concept of EASGD with the proposed adaptive focal loss [equation \ref{eq:ProposedFocalLoss}]. The training of the network is done in two phases.
\begin{enumerate}[(i)]
\item pre-training the network using deep autoencoders and
\item retraining the entire network using class labels.
\end{enumerate}

While pre-training the network, since there are no class labels involved, and the models merely learn to reconstruct the input, the unnecessary involvement of the adaptive cost-sensitive loss function is not needed. So, the autoencoders are trained by the standard EASGD (equation \ref{eq:EASGD}) to minimise the reconstructional mean squared error loss. Now, for the retraining, the model must learn the discrimination between classes even in presence of severe class imbalance. So, retraining needs the cost-sensitive loss function along with EASGD. The cost-sensitive adaptive focal loss function determines how much independence the individual local model would get in determining the optimum weights depending on the class imbalance they witness.

The pre-training without cost-sensitivity allows the autoencoders to distinguish between the majority and the minority classes. It has been proven that autoencoders map similar values to similar ranges \cite{wang2016auto} and thus for classification in such severe imbalanced datasets, they are very useful. Autoencoders are extensively employed in outlier identification \cite{lin2019probabilistic, sun2019stacked}, where an input with a substantial reconstruction error is identified as out-of-distribution, as the quality of reconstruction is expected to decrease for inputs that are considerably different from majority of the training data. The traditional AE-based pre-training is also favoured since it is highly stable and simple, as well as providing a decent initialisation to begin with. So, the retraining is to just fine tune the local models according to the proportion of the different classes present in the local dataset.
\section{Experiments and Results}
In order to establish the supremacy of the proposed model, the study explores many different angles on an empirical note. For a fair comparison between the methods, the numbers of models were increased. The model architecture were kept same for all the methods. All the methods were compared with the same partitions of the data produced by kmeans clustering (as assumed in section \ref{ddldp}). Each of the methods were run for only 20 communication rounds between the parameter server and the workers. The datasets used were highly imbalanced in nature. We have chosen eight binary-class outlier detection datasets for this purpose which had the outlier class samples representing less than 10\% of the total dataset. The description of the datasets are provided in table \ref{Tab:DDLdatasetdescr}].
\begin{table}[tbp]
\begin{center}
\begin{tabular}{llll}
\hline\hline
\textbf{Name of}       & \textbf{Number of} & \textbf{Number of} & \textbf{Percentage of} \\\hline\hline
\textbf{the   dataset} & \textbf{Samples}   & \textbf{features}  & \textbf{Outliers}      \\
Credit   Card          & 284807             & 29                 & 0.1761                 \\
HTRU2                  & 17898              & 8                  & 9.11                   \\
MNIST                  & 7603               & 100                & 8.7373                 \\
Pulsar                 & 17898              & 8                  & 9.204119               \\
Forest   Cover         & 286048             & 10                 & 0.9484                 \\
Mammogram              & 7828               & 6                  & 2.325373               \\
Pen Digits             & 4808               & 16                 & 2.271404               \\
ALOI                   & 34999              & 28                 & 3.016121               \\\hline\hline
\end{tabular}
\end{center}
\caption{Dataset description}
\label{Tab:DDLdatasetdescr}
\end{table}

The following sections provide the results of the experiments conducted. The results reported are the median over ten independent runs.
\subsection{How Pre-initialisation affects Outlier Detection?}

The initialization phase is essential to the model's eventual output and mandates an effective approach. Traditionally, weight initialization entailed employing smaller random values (close to zero). However, over the last decade, better explicit heuristics that utilise network architecture information, for example, the type of activation being utilised or the number of inputs to the network, have been developed. These rather specific strategies can lead to more successful neural network model construction when using the stochastic gradient descent optimization algorithm. This optimization procedure necessitates a starting point in the universe of possible weight values wherein the optimization may commence. Deep model training is a challenging enough process that the initialization method used has a significant impact on most techniques. The starting point can decide whether or not the process converges at all, with certain initial values being so unstable that the algorithm runs into numerical challenges and fails entirely. Many initialisation strategies are gradually developed. Some popular methods used these days are Random Normal \cite{drago1992statistically}, Random Uniform \cite{drago1992statistically}, Xavier-Glorot Normal \cite{glorot2011deep}, Xavier-Glorot Uniform \cite{glorot2011deep}, He Normal \cite{he2015delving} and He Uniform \cite{he2015delving}. AEs were also proposed as a pre-initialisation strategy to avoid the problem of vanishing and exploding gradients \cite{goodfellow2016deep}. While techniques like AE pre-training certainly laid the groundwork for many essential principles in today's deep learning methodologies, the compelling necessity for pre-training neural networks has waned in recent times. This was largely due to various advancements in regularisation, network architectures, and enhanced optimization methods. Despite these advancements, training deep neural networks to generalise well to a wide range of complicated tasks remains a major issue. One of these is the issue of imbalanced datasets. Neural networks get biased towards the majority class. However, AEs are well suited for outlier detection tasks. Hence, in order to strengthen the position of AE pre-training in outlier detection tasks, it was compared with the popular initialisation strategies. The models use various initialisation schemes and the training for classification is done using the focal loss function [equation \ref{eq:OriginalFocalLoss}] (because for a single model, the equations \ref{eq:OriginalFocalLoss} and \ref{eq:ProposedFocalLoss} are identical). Table \ref{Tab:InitiaIm} gives a comparison of how the various initialisations perform for severely imbalanced datasets (where the minority class has less than 10\% representation) with respect to all the four metrics namely ROC-AUC (Area Under the Curve of Receiver Operating Characteristic) \cite{hawkins2013identification}, F-Score  \cite{nicolas2017scala}, DR (Detection Rate) \cite{chakraborty2019integration} and G-Mean \cite{nicolas2017scala}. It can be seen that AE pre-training added an advantage to the performance of a deep neural network for all the eight datasets under consideration.

\begin{table}[tbp]
\begin{center}
\resizebox{\textwidth}{!}
{
\begin{tabular}{cllllllll}
\hline\hline
\multirow{3}{*}{\textbf{Dataset}} & \multicolumn{1}{c}{\multirow{3}{*}{\textbf{Metrics}}} & \textbf{}            & \textbf{}       & \textbf{}        & \textbf{Xavier-} & \textbf{Xavier-} & \textbf{}       & \textbf{}        \\
                                  & \multicolumn{1}{c}{}                                  & \textbf{AE-}         & \textbf{Random} & \textbf{Random}  & \textbf{Glorot}  & \textbf{Glorot}  & \textbf{He}     & \textbf{He}      \\
                                  & \multicolumn{1}{c}{}                                  & \textbf{Pretraining} & \textbf{Normal} & \textbf{Uniform} & \textbf{Normal}  & \textbf{Uniform} & \textbf{Normal} & \textbf{Uniform} \\\hline\hline
\multirow{4}{*}{Credit Card}      & \textbf{ROC}                                          & \textbf{0.9537}      & 0.5000          & 0.9339           & 0.9365           & 0.9379           & 0.9323          & 0.9461           \\
                                  & \textbf{F-Score}                                      & \textbf{0.9346}      & 0.5000          & 0.9036           & 0.9159           & 0.9176           & 0.9098          & 0.9104           \\
                                  & \textbf{DR}                                           & \textbf{0.8083}      & 0.0000          & 0.7870           & 0.7799           & 0.7799           & 0.7870          & 0.7941           \\
                                  & \textbf{G-Mean}                                       & \textbf{0.9351}      & 0.5000          & 0.9037           & 0.9163           & 0.9180           & 0.9100          & 0.9105           \\\hline
\multirow{4}{*}{HTRU2}            & \textbf{ROC}                                          & \textbf{0.9756}      & 0.5000          & 0.9726           & 0.9737           & 0.9736           & 0.9737          & 0.9731           \\
                                  & \textbf{F-Score}                                      & \textbf{0.9425}      & 0.0000          & 0.9309           & 0.9340           & 0.9348           & 0.9336          & 0.9316           \\
                                  & \textbf{DR}                                           & \textbf{0.8491}      & 0.0000          & 0.8450           & 0.8343           & 0.8407           & 0.8364          & 0.8257           \\
                                  & \textbf{G-Mean}                                       & \textbf{0.9426}      & 0.0000          & 0.9309           & 0.9342           & 0.9349           & 0.9337          & 0.9317           \\\hline
\multirow{4}{*}{MNIST}            & \textbf{ROC}                                          & \textbf{0.9952}      & 0.9831          & 0.8459           & 0.9907           & 0.9885           & 0.9856          & 0.9885           \\
                                  & \textbf{F-Score}                                      & \textbf{0.9548}      & 0.9359          & 0.7506           & 0.9340           & 0.9422           & 0.9278          & 0.9355           \\
                                  & \textbf{DR}                                           & \textbf{0.9339}      & 0.8734          & 0.3696           & 0.8550           & 0.8920           & 0.8595          & 0.8642           \\
                                  & \textbf{G-Mean}                                       & \textbf{0.9549}      & 0.9359          & 0.7537           & 0.9340           & 0.9422           & 0.9278          & 0.9355           \\\hline
\multirow{4}{*}{Pulsar}           & \textbf{ROC}                                          & \textbf{0.9797}      & 0.5000          & 0.9752           & 0.9733           & 0.9693           & 0.9723          & 0.9758           \\
                                  & \textbf{F-Score}                                      & \textbf{0.9528}      & 0.0000          & 0.9400           & 0.9449           & 0.9250           & 0.9325          & 0.9387           \\
                                  & \textbf{DR}                                           & \textbf{0.8590}      & 0.0000          & 0.8153           & 0.8369           & 0.7669           & 0.7909          & 0.8150           \\
                                  & \textbf{G-Mean}                                       & \textbf{0.9530}      & 0.0000          & 0.9405           & 0.9452           & 0.9257           & 0.9330          & 0.9391           \\\hline
\multirow{4}{*}{Mammogram}        & \textbf{ROC}                                          & \textbf{0.9389}      & 0.5000          & 0.9223           & 0.9306           & 0.9311           & 0.9341          & 0.9231           \\
                                  & \textbf{F-Score}                                      & \textbf{0.8817}      & 0.0000          & 0.8565           & 0.8362           & 0.8365           & 0.8413          & 0.7470           \\
                                  & \textbf{DR}                                           & \textbf{0.6980}      & 0.0000          & 0.6844           & 0.5892           & 0.6163           & 0.5756          & 0.3731           \\
                                  & \textbf{G-Mean}                                       & \textbf{0.8823}      & 0.0000          & 0.8566           & 0.8373           & 0.8370           & 0.8431          & 0.7499           \\\hline
\multirow{4}{*}{Pendigits}        & \textbf{ROC}                                          & \textbf{0.9876}      & 0.5000          & 0.5000           & 0.9706           & 0.9779           & 0.9753          & 0.9761           \\
                                  & \textbf{F-Score}                                      & \textbf{0.9660}      & 0.0000          & 0.0000           & 0.9339           & 0.9486           & 0.9601          & 0.9495           \\
                                  & \textbf{DR}                                           & \textbf{0.9763}      & 0.4888          & 0.4888           & 0.9049           & 0.9636           & 0.9755          & 0.9543           \\
                                  & \textbf{G-Mean}                                       & \textbf{0.9112}      & 0.0000          & 0.0000           & 0.9334           & 0.8670           & 0.8891          & 0.8891           \\\hline
\multirow{4}{*}{ALOI}             & \textbf{ROC}                                          & \textbf{1.0000}      & 0.5000          & \textbf{1.0000}  & \textbf{1.0000}  & \textbf{1.0000}  & \textbf{1.0000} & \textbf{1.0000}  \\
                                  & \textbf{F-Score}                                      & \textbf{0.9994}      & 0.0000          & 0.9966           & 0.9983           & 0.9983           & 0.9977          & 0.9977           \\
                                  & \textbf{DR}                                           & \textbf{1.0000}      & 0.0000          & 0.9908           & 0.9954           & 0.9954           & 0.9931          & 0.9931           \\
                                  & \textbf{G-Mean}                                       & \textbf{0.9994}      & 0.0000          & 0.9966           & 0.9983           & 0.9983           & 0.9977          & 0.9977   \\\hline\hline
\end{tabular}}
\caption{Effect of Various Network Initialisations on the Imbalanced Classification Performance}
\label{Tab:InitiaIm}
\end{center}
\end{table}

It can be seen that AE pretrained models are better in performance for outlier detection tasks. This supports the findings \cite{chakraborty2019integration} that AE are better feature extractors for outlier detection tasks. Thus, the experiments can now be extended to multiple models in a federated setting as described in the next sections.
\subsection{Data Division and Critical Imbalance}
Data division in a non-i.i.d setting violates two main properties of i.i.d \cite{jacobs1992independent}. Firstly, the data is divided in an insufficiently random order, which generates the violation on independence. Secondly, there is varying amounts of data in each division (with respect to class labels or sample quantity, or both). This clearly implies that each data division must follow a separate distribution which is not identical to the global data distribution. Let the global data distribution is $\mathcal{D}$, then in an i.i.d data division, every $i^{th}$ portion of data $\mathcal{P}^{(i)}$ has the same distribution $\mathcal{D}$. However, for non-i.i.d data division, every $i^{th}$ portion of data $\mathcal{P}^{(i)}$ might follow a different distribution $\mathcal{D}^{(i)}$, which means that there is an underlying structure in each data portion. To mimic this property, one may safely consider clustering the data into $k$ groups. In this work, the clustering has been done using $k$-means clustering algorithm. This preserves the non-i.i.d properties that the data portions do not follow an identical distribution (as each cluster consists of similar samples and samples from different clusters are dissimilar to each other) and are also not independent (as both the proportion of class labels and number of samples vary in each cluster). The number $k$ is not always equal to the number of classes in the dataset. This is because any clustering algorithm groups data according to similarity of the samples and not according to the labels. Homogeneity metric \cite{rosenberg2007v} indicates the conditional entropy of a clustering algorithm in identifying the correct classification and is bounded between 0 to 1. This means that if more clusters contain samples from only one class, the score will increase. This is evident from our findings too [Figure \ref{Homogeneity}].
\begin{figure}
  \centering
      \includegraphics[width=1\textwidth]{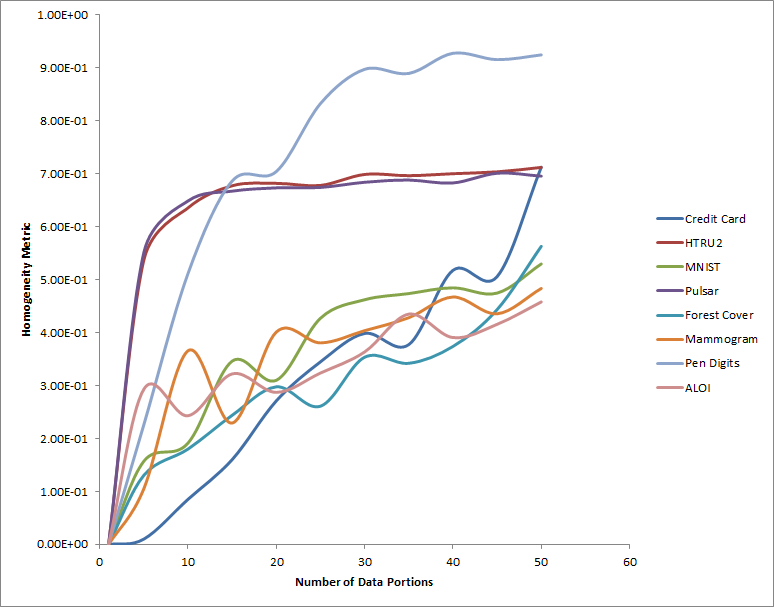}
  \caption{Variation of the Homogeneity Metric with the Number of Data Portions}
  \label{Homogeneity}
\end{figure}
It can be seen that as the number of data portions increased (or number of worker nodes increased), there were multiple clusters which had no samples from the minority class [Figure \ref{MinorityPerc}] (result shown for Credit Card dataset).
\begin{figure}
  \centering
      \includegraphics[width=1\textwidth]{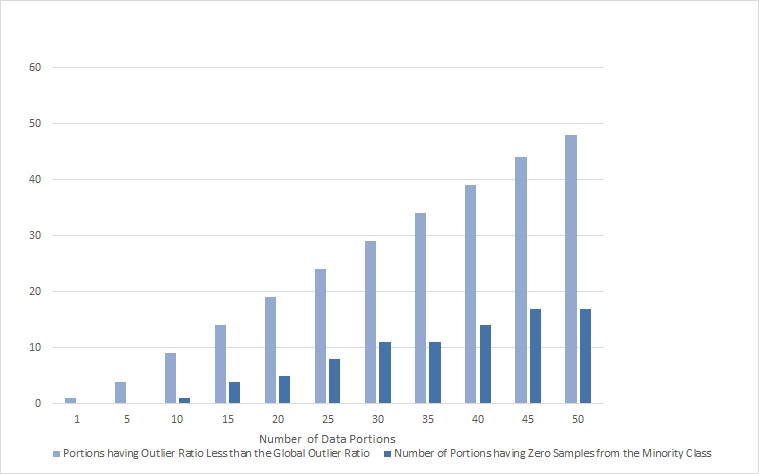}
  \caption{Imbalance Becomes Critical with the Increasing Number of Data Portions}
  \label{MinorityPerc}
\end{figure}
The number of data portions having imbalance ratio below the global imbalance ratio kept on increasing with the number of data portions (or number of worker nodes) [Figure \ref{MinorityPerc}]. This clearly indicates a critical imbalance situation that federated learning suffers from.
\subsection{Performance under Critical Imbalance}
As can be seen from Figure \ref{MinorityPerc}, with increasing number of partitions, there are multiple local datasets which suffered critical imbalance (i.e. have no samples from the outlier class). Under such conditions, we need to analyse how the performance of the global model varies with each communication round. For experimental purposes, we have taken 10 worker nodes for Credit Card dataset. It can be seen from Figure \ref{MinorityPerc} that for 10 worker nodes, there was 1 data portions which had zero samples from the outlier class and 9 data portions having outlier ratio less than the global outlier ratio (including the one which has no outlier samples). Figure \ref{Stability} shows the F-Score obtained on the test data for each local model before each communication round and global model after each communication round. This is clear indication of the stability of the elastic averaging \cite{zhang2015deep}.
\begin{figure}
  \centering
      \includegraphics[width=0.7\textwidth]{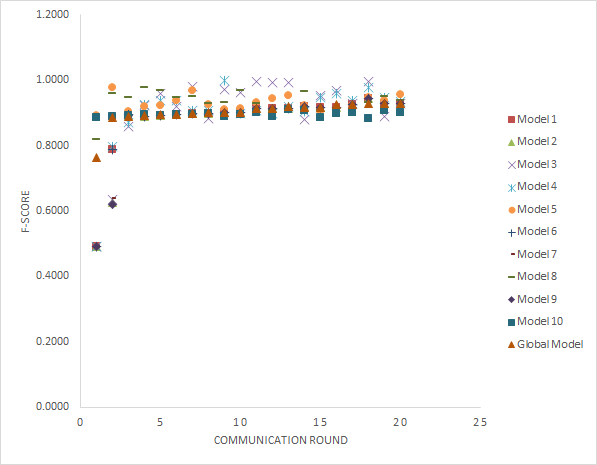}
  \caption{F-Score of Individual Local Models and the Global Model vs Communication Round}
  \label{Stability}
\end{figure}

It can be seen from figure \ref{Stability} that the individual models gradually coincided with similar performance on the test data with each communication round. It can be seen that several models were not performing well in the initial rounds. Models 1, 2, 3, 4, 6, 7 and 9 were only capable of detecting the samples from the majority class after the first round. However, due to the learning obtained from the other well performing models, they gradually shifted to a better performance in classifying the samples. This clearly indicates that even though there are some under-performing models in the learner pool, the global model will not be so severely affected by it and there will be a gradual learning eventually.
\subsection{Number of workers vs overall performance}
Under the above conditions, another important aspect is to study how the increasing number of worker nodes affect the overall performance of the global model. In the case presented, the total data is always constant and each worker gets a unique partition of the dataset. Thus, as the number of worker nodes increase, the number of samples in each worker node will decrease. So, increasing the number of workers above a certain limit is not possible so as to prevent the unlikely condition of models getting insufficient samples for training. It has been seen that all the existing methods tend to be biased towards the majority class as the number of workers grow. It is noteworthy that increasing the number of workers in the present assumption of experiments that the data samples are partitioned (i.e. increasing the number of workers would decrease the number of samples in each partition), not having an enough quantity of the data in each worker node will be an inevitable cause of performance degradation. Hence, the study mainly shows that the proposed method yields a consistent performance with increasing worker nodes while the existing methods hit a saturation point after a certain limit. The proposed method has been compared with the existing methods used for outlier detection in federated learning: EASGD (with cross-entropy loss) \cite{zhang2015deep}, Astraea\cite{duan2019astraea}, FedFocal \cite{sarkar2020fed}, and Ratio Loss \cite{wang2021addressing}.
\begin{figure}
  \centering
      \includegraphics[width=0.7\textwidth]{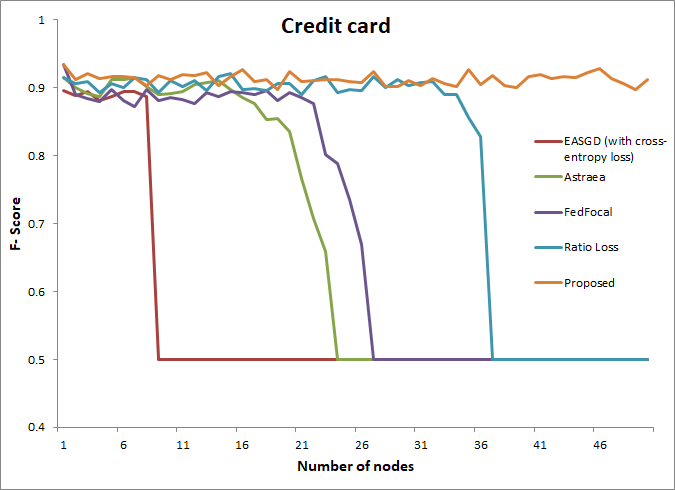}
  \caption{Variation of F-Score Obtained by Various Methods for Different Numbers of Worker Nodes on Credit Card Dataset.}
  \label{Result1}
\end{figure}
\begin{figure}
  \centering
      \includegraphics[width=0.7\textwidth]{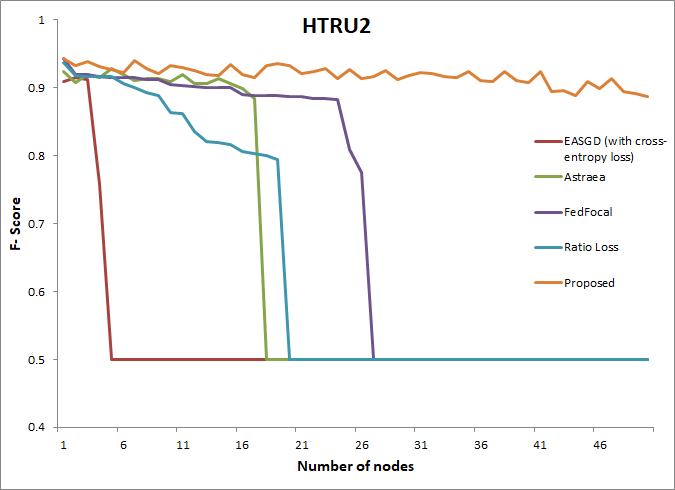}
  \caption{Variation of F-Score Obtained by Various Methods for Different Numbers of Worker Nodes on HTRU2 Dataset.}
  \label{Result2}
\end{figure}
\begin{figure}
  \centering
      \includegraphics[width=0.7\textwidth]{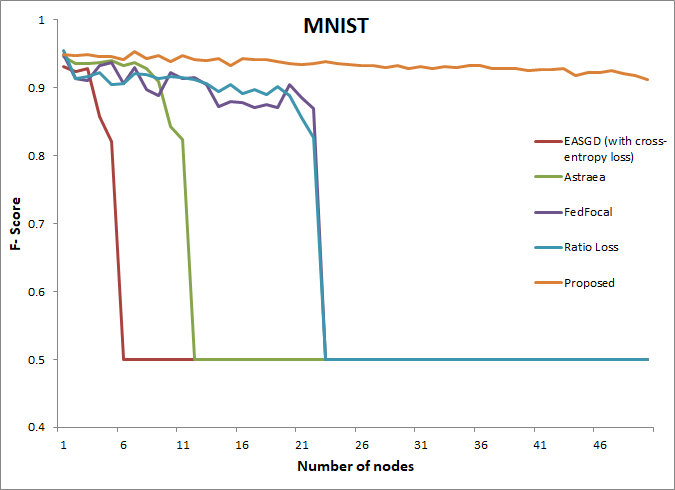}
  \caption{Variation of F-Score Obtained by Various Methods for Different Numbers of Worker Nodes on MNIST Dataset.}
  \label{Result3}
\end{figure}
\begin{figure}
  \centering
      \includegraphics[width=0.7\textwidth]{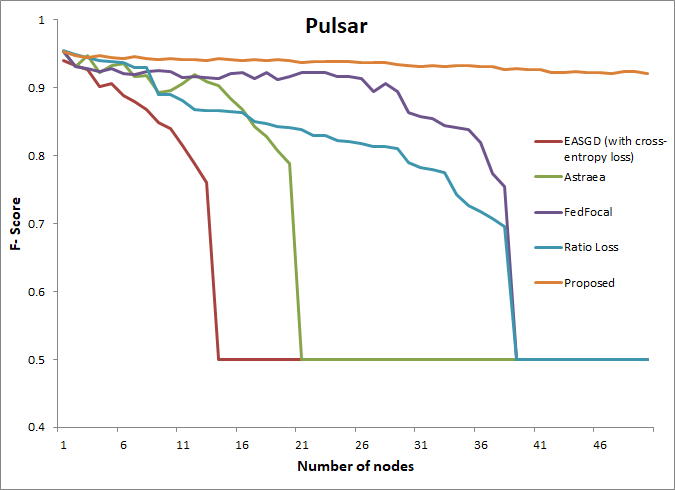}
  \caption{Variation of F-Score Obtained by Various Methods for Different Numbers of Worker Nodes on Pulsar Dataset.}
  \label{Result4}
\end{figure}
\begin{figure}
  \centering
      \includegraphics[width=0.7\textwidth]{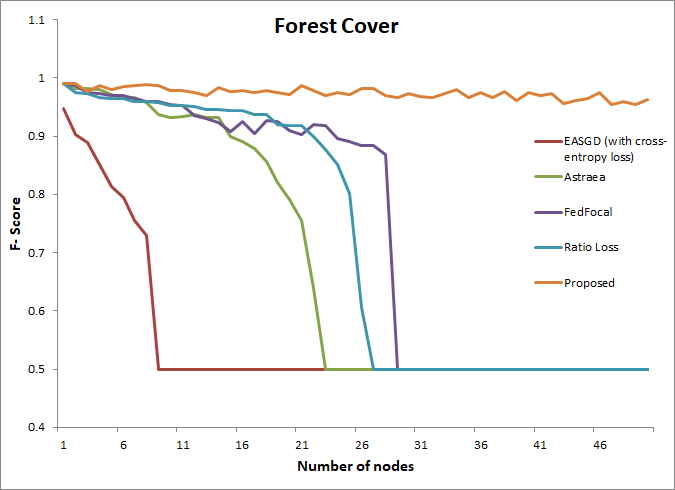}
  \caption{Variation of F-Score Obtained by Various Methods for Different Numbers of Worker Nodes on Forest Cover Dataset.}
  \label{Result5}
\end{figure}
\begin{figure}
  \centering
      \includegraphics[width=0.7\textwidth]{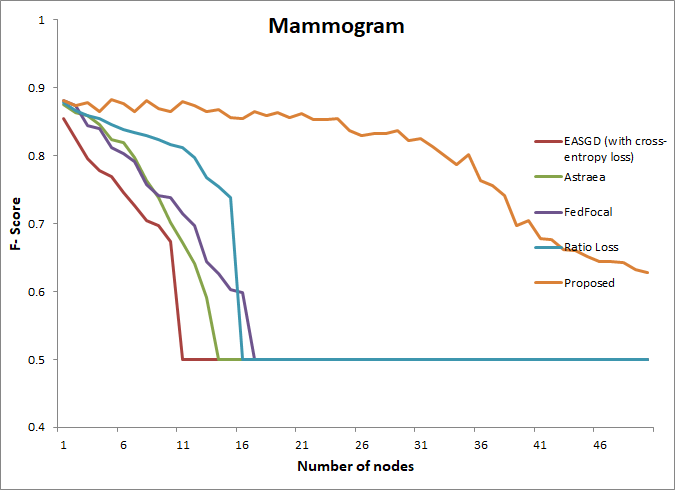}
  \caption{Variation of F-Score Obtained by Various Methods for Different Numbers of Worker Nodes on Mammogram Dataset.}
  \label{Result6}
\end{figure}
\begin{figure}
  \centering
      \includegraphics[width=0.7\textwidth]{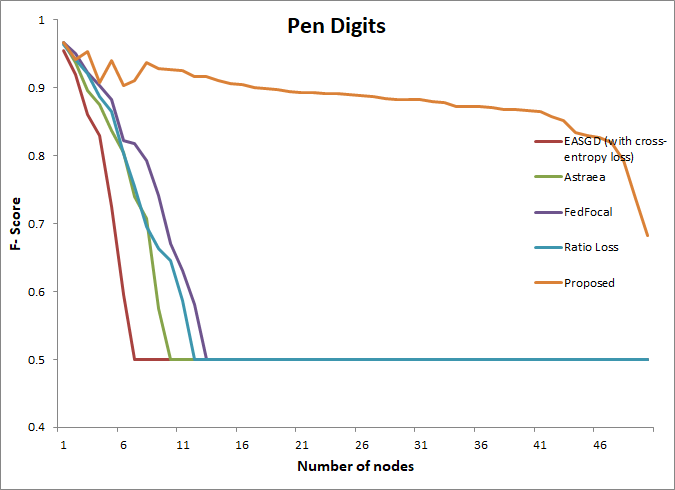}
  \caption{Variation of F-Score Obtained by Various Methods for Different Numbers of Worker Nodes on Pen Digits Dataset.}
  \label{Result7}
\end{figure}
\begin{figure}
  \centering
      \includegraphics[width=0.7\textwidth]{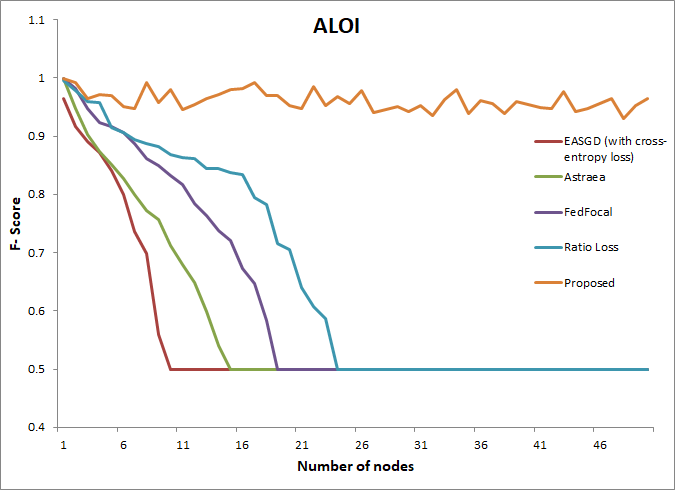}
  \caption{Variation of F-Score Obtained by Various Methods for Different Numbers of Worker Nodes on ALOI Dataset.}
  \label{Result8}
\end{figure}

As observed from the above experiments, the outlier detection abilities of EASGD (with cross-entropy loss) suffers a fast degradation with increasing number of nodes. Astraea improved on the detection abilities but with increasing number of nodes, showed incompetence. The Ratio Loss method and the method FedFocal were more or less similar in performance and were able to detect outliers fairly upto an appreciable number of nodes before suffering failure. The proposed method had the best performance among all the methods and the performance was more or less steady with increasing number of nodes and was still able to detect outliers while all the detection abilities of the other existing methods already collapsed. There was a gradual decrease in the performance of the proposed method for some datasets due to the lack of enough training samples in each local partition with the increasing number of nodes. This is an expected consequence of our assumption that the total number of samples in the global dataset is constant.
\section{Discussion and Conclusion}
In this article it has been shown that AE pre-training is better for outlier detection tasks. The study is then extended to the use of AE pre-training in distributed setting. In this context an adaptive version of focal loss function is introduced to handle the critical imbalance in the distributed data setting. The main assumptions in the experiments were that the data available at each local node is mutually exclusive. If one local node fails, it is assumed that there is a backup node for each local node which would continue with the learning process. Under such assumptions, it has been observed that the proposed method can handle the critical imbalance better than the existing methods. The experiments show that even when some local datasets did not have any samples from the outlier class, the proposed method drove the global model and each of the local models to similar performance on the test data.

However, there might arise an extreme case of imbalance where one local model may contain a lot of outlier samples such that the outlier class becomes the majority for that particular local dataset and all the other local datasets do not contain any outlier sample at all. In that case, the present solution might not work as it relies on the outlier ratio. Such an extreme case would practically imply that all the anomalies are generated at the same location while others do not generate any at all. This problem needs to be handled in a different light and has been set aside as a direction of future research. The research may also be extended to handle multiple classes where the local imbalance is critical and the global imbalance may or may not be present.

\bibliographystyle{plain}
\bibliography{Main}

\begin{thebibliography}{10}

\bibitem{chakraborty2019integration}
D.~Chakraborty, V.~Narayanan, and A.~Ghosh.
\newblock Integration of {D}eep {F}eature {E}xtraction and {E}nsemble
  {L}earning for {O}utlier {D}etection.
\newblock {\em Pattern Recognition}, 89:161--171, 2019.

\bibitem{drago1992statistically}
G.~P. Drago and S.~Ridella.
\newblock Statistically {C}ontrolled {A}ctivation {W}eight {I}nitialization
  ({S}{C}{A}{W}{I}).
\newblock {\em IEEE Transactions on Neural Networks}, 3(4):627--631, 1992.

\bibitem{duan2020self}
M.~Duan, D.~Liu, X.~Chen, R.~Liu, Y.~Tan, and L.~Liang.
\newblock {S}elf-{B}alancing {F}ederated {L}earning with {G}lobal {I}mbalanced
  {D}ata in {M}obile {S}ystems.
\newblock {\em IEEE Transactions on Parallel and Distributed Systems},
  32(1):59--71, 2020.

\bibitem{duan2019astraea}
M.~Duan, D.~Liu, X.~Chen, Y.~Tan, J.~Ren, L.~Qiao, and L.~Liang.
\newblock Astraea: {S}elf-{B}alancing {F}ederated {L}earning for {I}mproving
  {C}lassification {A}ccuracy of {M}obile {D}eep {L}earning {A}pplications.
\newblock In {\em 37th International Conference on Computer Design (ICCD)},
  pages 246--254. IEEE, 2019.

\bibitem{erfani2016high}
S.~M. Erfani, S.~Rajasegarar, S.~Karunasekera, and C.~Leckie.
\newblock High-{D}imensional and {L}arge-{S}cale {A}nomaly {D}etection using a
  {L}inear {O}ne-{C}lass {S}{V}{M} with {D}eep {L}earning.
\newblock {\em Pattern Recognition}, 58:121--134, 2016.

\bibitem{glorot2011deep}
X.~Glorot, A.~Bordes, and Y.~Bengio.
\newblock Deep {S}parse {R}ectifier {N}eural {N}etworks.
\newblock In {\em Proceedings of the Fourteenth International Conference on
  Artificial Intelligence and Statistics}, pages 315--323. JMLR Workshop and
  Conference Proceedings, 2011.

\bibitem{goodfellow2016deep}
I.~Goodfellow, Y.~Bengio, and A.~Courville.
\newblock {\em Deep {L}earning}.
\newblock Adaptive {C}omputation and {M}achine {L}earning {S}eries. MIT Press,
  2016.

\bibitem{gupta2018distributed}
O.~Gupta and R.~Raskar.
\newblock Distributed {L}earning of {D}eep {N}eural {N}etwork over {M}ultiple
  {A}gents.
\newblock {\em Journal of Network and Computer Applications}, 116:1--8, 2018.

\bibitem{hawkins2013identification}
D.~Hawkins.
\newblock {\em Identification of {O}utliers}.
\newblock Springer Netherlands, 2014.

\bibitem{he2015delving}
K.~He, X.~Zhang, S.~Ren, and J.~Sun.
\newblock Delving {D}eep into {R}ectifiers: {S}urpassing {H}uman-{L}evel
  {P}erformance on {I}magenet {C}lassification.
\newblock In {\em Proceedings of the IEEE International Conference on Computer
  Vision}, pages 1026--1034, 2015.

\bibitem{hinton2006fast}
G.~E. Hinton, S.~Osindero, and Y.~W. Teh.
\newblock A {F}ast {L}earning {A}lgorithm for {D}eep {B}elief {N}ets.
\newblock {\em Neural Computation}, 18(7):1527--1554, 2006.

\bibitem{hyunsoo2021study}
L.~Hyunsoo, S.~Hong, J.~Bang, and H.~Kim.
\newblock A {S}tudy on {O}ptimization {T}echniques for {A}pplying {F}ederated
  {L}earning to {C}lass {I}mbalance {P}roblems.
\newblock {\em Journal of the Korean Information Technology Association},
  19(1):43--54, 2021.

\bibitem{jacobs1992independent}
K.~Jacobs.
\newblock Independent {I}dentically {D}istributed (iid) {R}andom {V}ariables.
\newblock In {\em Discrete Stochastics}, pages 65--101. Springer, 1992.

\bibitem{li2019gradient}
B.~Li, Y.~Liu, and X.~Wang.
\newblock Gradient {H}armonized {S}ingle-{S}tage {D}etector.
\newblock In {\em Proceedings of the AAAI Conference on Artificial
  Intelligence}, volume~33, pages 8577--8584, 2019.

\bibitem{lin2017focal}
T.~Y. Lin, P.~Goyal, R.~Girshick, K.~He, and P.~Doll{\'a}r.
\newblock Focal {L}oss for {D}ense {O}bject {D}etection.
\newblock In {\em Proceedings of the IEEE International Conference on Computer
  Vision}, pages 2980--2988, 2017.

\bibitem{lin2019probabilistic}
Y.~Lin and J.~Wang.
\newblock Probabilistic {D}eep {A}utoencoder for {P}ower {S}ystem {M}easurement
  {O}utlier {D}etection and {R}econstruction.
\newblock {\em IEEE Transactions on Smart Grid}, 11(2):1796--1798, 2019.

\bibitem{liu2020high}
Yang Liu, Xiang Li, Xianbang Chen, Xi~Wang, and Huaqiang Li.
\newblock High-performance machine learning for large-scale data classification
  considering class imbalance.
\newblock {\em Scientific Programming}, 2020, 2020.

\bibitem{mcmahan2017communication}
B.~McMahan, E.~Moore, D.~Ramage, S.~Hampson, and B.~A. y~Arcas.
\newblock Communication-{E}fficient {L}earning of {D}eep {N}etworks from
  {D}ecentralized {D}ata.
\newblock In {\em Artificial Intelligence and Statistics}, pages 1273--1282.
  PMLR, 2017.

\bibitem{moreno2021heterogeneous}
S.~Moreno-Alvarez, J.~M. Haut, M.~E. Paoletti, and J.~A. Rico-Gallego.
\newblock Heterogeneous {M}odel {P}arallelism for {D}eep {N}eural {N}etworks.
\newblock {\em Neurocomputing}, 441:1--12, 2021.

\bibitem{nicolas2017scala}
P.~R. Nicolas.
\newblock {\em Scala for {M}achine {L}earning: {D}ata processing, {ML}
  algorithms, {S}mart {A}nalytics, and more}.
\newblock Packt Publishing, 2017.

\bibitem{oyama2020case}
Y.~Oyama, N.~Maruyama, N.~Dryden, E.~McCarthy, P.~Harrington, J.~Balewski,
  S.~Matsuoka, P.~Nugent, and B.~Van~Essen.
\newblock The {C}ase for {S}trong {S}caling in {D}eep {L}earning: {T}raining
  {L}arge 3{D} {C}{N}{N}s with {H}ybrid {P}arallelism.
\newblock {\em IEEE Transactions on Parallel and Distributed Systems},
  32(7):1641--1652, 2020.

\bibitem{protopapadakis2017stacked}
E.~Protopapadakis, A.~Voulodimos, A.~Doulamis, N.~Doulamis, D.~Dres, and
  M.~Bimpas.
\newblock Stacked {A}utoencoders for {O}utlier {D}etection in
  {O}ver-the-{H}orizon {R}adar {S}ignals.
\newblock {\em Computational Intelligence and Neuroscience}, 2017, 2017.

\bibitem{re2012ensemble}
M.~Re and G.~Valentini.
\newblock Ensemble {M}ethods.
\newblock {\em Advances in Machine Learning and Data Mining for Astronomy},
  pages 563--593, 2012.

\bibitem{rosenberg2007v}
A.~Rosenberg and J.~Hirschberg.
\newblock V-{M}easure: {A} {C}onditional {E}ntropy-based {E}xternal {C}luster
  {E}valuation {M}easure.
\newblock In {\em Proceedings of the 2007 Joint Conference on Empirical Methods
  in Natural Language Processing and Computational Natural Language Learning
  (EMNLP-CoNLL)}, pages 410--420, 2007.

\bibitem{sarkar2020fed}
D.~Sarkar, A.~Narang, and S.~Rai.
\newblock Fed-{F}ocal {L}oss for {I}mbalanced {D}ata {C}lassification in
  {F}ederated {L}earning.
\newblock {\em arXiv preprint arXiv:2011.06283}, 2020.

\bibitem{shallue2019measuring}
C.~J. Shallue, J.~Lee, J.~Antognini, J.~Sohl-Dickstein, R.~Frostig, and G.~E.
  Dahl.
\newblock Measuring the {E}ffects of {D}ata {P}arallelism on {N}eural {N}etwork
  {T}raining.
\newblock {\em Journal of Machine Learning Research}, 20(112):1--49, 2019.

\bibitem{sun2019stacked}
Z.~Sun and H.~Sun.
\newblock Stacked {D}enoising {A}utoencoder with {D}ensity-{G}rid based
  {C}lustering {M}ethod for {D}etecting {O}utlier of {W}ind {T}urbine
  {C}omponents.
\newblock {\em IEEE Access}, 7:13078--13091, 2019.

\bibitem{tamen2017efficient}
Z.~Tamen, H.~Drias, and D.~Boughaci.
\newblock An {E}fficient {M}ultiple {C}lassifier {S}ystem for {A}rabic
  {H}andwritten {W}ords {R}ecognition.
\newblock {\em Pattern Recognition Letters}, 93:123--132, 2017.

\bibitem{wang2021addressing}
L.~Wang, S.~Xu, X.~Wang, and Q.~Zhu.
\newblock Addressing {C}lass {I}mbalance in {F}ederated {L}earning.
\newblock In {\em Proceedings of the AAAI Conference on Artificial
  Intelligence}, volume~35, pages 10165--10173, 2021.

\bibitem{wang2016training}
S.~Wang, W.~Liu, J.~Wu, L.~Cao, Q.~Meng, and P.~J. Kennedy.
\newblock Training {D}eep {N}eural {N}etworks on {I}mbalanced {D}ata {S}ets.
\newblock In {\em International Joint Conference on Neural Networks (IJCNN)},
  pages 4368--4374. IEEE, 2016.

\bibitem{wang2016auto}
Y.~Wang, H.~Yao, and S.~Zhao.
\newblock Auto-encoder based {D}imensionality {R}eduction.
\newblock {\em Neurocomputing}, 184:232--242, 2016.

\bibitem{wu2021fast}
Hongda Wu and Ping Wang.
\newblock Fast-convergent federated learning with adaptive weighting.
\newblock {\em IEEE Transactions on Cognitive Communications and Networking},
  2021.

\bibitem{wu2013data}
X.~Wu, X.~Zhu, G.~Q. Wu, and W.~Ding.
\newblock Data {M}ining with {B}ig {D}ata.
\newblock {\em IEEE Transactions on Knowledge and Data Engineering},
  26(1):97--107, 2013.

\bibitem{xu2020privacy}
G.~Xu, H.~Li, Y.~Zhang, S.~Xu, J.~Ning, and R.~Deng.
\newblock Privacy-{P}reserving {F}ederated {D}eep {L}earning with {I}rregular
  {U}sers.
\newblock {\em IEEE Transactions on Dependable and Secure Computing}, 2020.

\bibitem{zhang2012ensemble}
C.~Zhang and Y.~Ma.
\newblock {\em Ensemble {M}achine {L}earning: {M}ethods and {A}pplications}.
\newblock Springer, 2012.

\bibitem{zhang2015deep}
S.~Zhang, A.~Choromanska, and Y.~Lecun.
\newblock Deep {L}earning with {E}lastic {A}veraging {S}{G}{D}.
\newblock {\em Advances in Neural Information Processing Systems},
  2015:685--693, 2015.

\bibitem{zhou2021communication}
Yuhao Zhou, Qing Ye, and Jian~Cheng Lv.
\newblock Communication-efficient federated learning with compensated
  overlap-fedavg.
\newblock {\em IEEE Transactions on Parallel and Distributed Systems}, 2021.

\end{thebibliography}
\end{document}